\theoremstyle{definition}
\newtheorem{definition}{Definition}
\newtheorem{theorem}{Theorem}
\newtheorem{lemma}{Lemma}
\title{Edged Weisfeiler-Lehman algorithm
\thanks{\textit{\underline{Citation}}: 
\textbf{Yue, X., Liu, B., Zhang, F., Qu, G. (2024). Edged Weisfeiler-Lehman Algorithm. In: Wand, M., Malinovská, K., Schmidhuber, J., Tetko, I.V. (eds) Artificial Neural Networks and Machine Learning – ICANN 2024. ICANN 2024. Lecture Notes in Computer Science, vol 15020. Springer, Cham. https://doi.org/10.1007/978-3-031-72344-5\_7}} 

}
\author{
  Xiao Yue, Guangzhi Qu \\
  Oakland University\\
  Rochester, USA\\
  \texttt{\{xiaoyue, gqu\}@oakland.edu} \\
   \And
    Bo Liu \\
  Massey University \\
  Palmerston North, New Zealand\\
  \texttt{b.liu@massey.ac.nz} \\
     \And
    Feng Zhang \\
  China University of Geosciences \\
  Wuhan, China\\
  \texttt{jeff.f.zhang@gmail.com} \\
}
\begin{document}
\maketitle

\begin{center}
\small
This is the author’s accepted manuscript (AAM) of a paper published in ICANN 2024, Lecture Notes in Computer Science (LNCS), Springer.
The final authenticated version is available at: \url{https://link.springer.com/chapter/10.1007/978-3-031-72344-5_7}
\end{center}

\begin{abstract}
As a classical approach on graph learning, the propagation-aggregation methodology is widely exploited by many of Graph Neural Networks (GNNs), wherein the representation of a node is updated by aggregating representations from itself and neighbor nodes recursively. Similar to the propagation-aggregation methodology, the Weisfeiler-Lehman (1-WL) algorithm tests isomorphism through color refinement according to color representations of a node and its neighbor nodes. However, 1-WL does not leverage any edge features (labels), presenting a potential improvement on exploiting edge features in some fields. To address this limitation, we proposed a novel Edged-WL algorithm (E-WL) which extends the original 1-WL algorithm to incorporate edge features. Building upon the E-WL algorithm, we also introduce an Edged Graph Isomorphism Network (EGIN) model for further exploiting edge features, which addresses one key drawback in many GNNs that do not utilize any edge features of graph data. We evaluated the performance of proposed models using 12 edge-featured benchmark graph datasets and compared them with some state-of-the-art baseline models. Experimental results indicate that our proposed EGIN models, in general, demonstrate superior performance in graph learning on graph classification tasks. \url{https://github.com/YxRicardo/EGIN}
\end{abstract}

\keywords{Graph Neural Networks  \and Weisfeiler-Lehman algorithm}

\section{Introduction}
With the development of graph learning, various domains including social analysis \cite{backstrom2011supervised}, biology, transportation and financial systems \cite{wu2020comprehensive} have embraced the integration with graph learning techniques. However, natural properties of lacking Euclidean structured data in graphs impede the applications of machine learning in these fields. To address learning with Non-Euclidean graph structure data, Graph Neural Networks (GNNs) \cite{micheli2009neural,scarselli2008graph} and their variants such as graph auto-encoder \cite{kipf2016variational}, graph recurrent neural networks \cite{li2015gated,tai2015improved}, graph attention networks \cite{velivckovic2017graph} and graph convolutional networks \cite{kipf2016semi} have been proposed. As a classical approach on graph learning, the propagation-aggregation methodology is widely exploited by many of GNNs, wherein the representation of a node is updated by aggregating representations from itself and neighbor nodes recursively. Consequently, a representation of the entire graph is derived through a Readout function, such as a pooling layer or an aggregation function.\\
Similar to the propagation-aggregation methodology, the Weisfeiler-Lehman (1-WL) algorithm \cite{weisfeiler1968reduction} tests isomorphism through color refinement according to color representations of a node and its neighbor nodes. The color representation of each node is updated by aggregating color representations from neighbor nodes in each iteration. This algorithm converges when color representations of all nodes are stabilized, generating a representation of the whole graph. Two WL representations are identical if two graphs are isomorphic. Despite it's one of the most efficient algorithms for isomorphism testing, several drawbacks such as only considering 1-tuple of the neighbor nodes and incapability on handling featured edges, impact performance of 1-WL to fail in distinguishing some simple examples \cite{huang2021short}. Inspired by the Weisfeiler-Lehman test, Xu et al. developed the Graph Isomorphism Network (GIN) \cite{xu2018powerful} to explore the maximum power of graph neural networks. However, similar to the one drawback of 1-WL, GIN does not leverage any edge features, presenting a potential improvement on exploiting edge features in some fields. For instance, in the context of the chemical graph theory, a chemical compound is described as a graph whose nodes correspond to atoms in the compound while edges represent chemical bonds. The chemical attributes and properties of the entire compound can be influenced by variations in chemical bonds. To address this limitation, we proposed a novel Edged Weisfeiler-Lehman algorithm (E-WL) that extends the original 1-WL algorithm by incorporating edge features. E-WL algorithm improves its ability to identify more isomorphisms by also considering representations of all neighbor edges when updating the color representation of a node. In this case, two graphs that have same topology structure and node features but different edge features will not be considered isomorphic. Building upon the E-WL algorithm, we also introduce an Edged Graph Isomorphism Network (EGIN) for further exploiting edge features, which addresses one key drawback in many GNNs that do not utilize any edge features of graph data. By fully exploiting edge features in graphs, discriminative powers of GNNs, which refer to abilities of the models to effectively distinguish and classify different graphs, are enhanced on edge-featured graph data. In addition, to address a potential limitation due to edge feature dimensions on EGIN, we have developed two variants of EGIN which improve the update and aggregation methods, denoted as EGIN-C (Cross updating) and EGIN-E (Edge embedding). EGIN-C exploits a cross updating method to create the new representations of nodes, while EGIN-E utilizes embeddings of edge features instead of original edge features at each iteration. We evaluated the performance of proposed models using 12 edge-featured benchmark graph datasets and compared them with some baseline models. Experimental results show that our proposed EGIN models outperform baselines on the most of the datasets. Contributions of this paper are summarized as follows:
\begin{itemize}
    \item We extend the Weisfeiler-Lehman algorithm by considering edge features in graphs, leading to a new Edged Weisfeiler-Lehman Algorithm.
    \item Based on the proposed Edged Weisfeiler-Lehman Algorithm, we develop a novel Edged Graph Isomorphism Network for effectively exploiting edge features in graph data.
    \item We propose two variants of EGIN which address the potential limitation of feature dimensions on the original EGIN.
\end{itemize}

\section{Preliminaries}
This section introduces preliminaries, covering notations, the Weisfeiler-Lehman algorithm, and a general model of Graph Neural Networks based on propagation-aggregation
strategy. 

\subsection{Notations}
Given an undirected graph $\mathcal{G} = (\mathcal{V},\mathcal{E}, \mathcal{X}_{V},\mathcal{X}_{E})$ assumed to be free of self-loops and isolated nodes, where $\mathcal{V}$ represents the set of all nodes and $\mathcal{E}$ represents the set of all edges in $\mathcal{G}$, $\mathcal{X}_{V}$ and $\mathcal{X}_{E}$ are sets of node features and edge features, respectively. Each node $n_{i}\in \mathcal{V}$ has a feature vector denoted as $x^{n}_{i} \in \mathcal{X}_{V}$. Each edge $e_{i,j} \in \mathcal{E}$ connecting nodes $n_{i}$ and $n_{j}$ has a feature vector denoted as $x^{e}_{i,j} \in \mathcal{X}_{E}$. For graphs without any node features, $x^{n}_{i}$ can be represented by the node degree, a constant, or a unique identifier. In this work, our emphasis is on edge-featured graphs, as edge features play a crucial role in the target applications of interest. Otherwise, the proposed approaches may degenerate into some existing methods.
\begin{definition}
  Isomorphism: Given two graphs $\mathcal{G}^{1} = (\mathcal{V}^{1},\mathcal{E}^{1}, \mathcal{X}^{1}_{V},\mathcal{X}^{1}_{E})$ and $\mathcal{G}^{2} = (\mathcal{V}^{2},\mathcal{E}^{2}, \mathcal{X}^{2}_{V},\mathcal{X}^{2}_{E})$, $\mathcal{G}^{1}$ and $\mathcal{G}^{2}$ are isomorphic if there exists a bijective mapping $f$ between $\mathcal{V}^{1}$ and $\mathcal{V}^{2}$, denoted as $f:\mathcal{V}^{1} \rightarrow \mathcal{V}^{2}$. If $e_{i,j} \in \mathcal{E}^{1}$, then $e_{f(i),f(j)} \in \mathcal{E}^{2}$. If $e_{i,j} \notin \mathcal{E}^{1}$, then $e_{f(i),f(j)} \notin \mathcal{E}^{2}$.
\end{definition}

\subsection{Weisfeiler-Lehman Algorithm}
\begin{definition}
  Multiset: A multiset is a set allowing for multiple instances of each of its elements, noted as $\{\{\cdot\}\}$. 
\end{definition}

The Weisfeiler-Lehman algorithm is widely adopted for isomorphism testing. Consider a graph $\mathcal{G} = (\mathcal{V},\mathcal{E}, \mathcal{X}_{V},\mathcal{X}_{E})$, where $\mathcal{X}_{V}$ and $\mathcal{X}_{E}$ are also multisets, the color representation of a node $n_{i}$ at the $l^{th}$ iteration is denoted as $c^{(l)}_i$. Here we assume all node features $x^{n}_{i}$ are discrete. Initially, $c^{(0)}_i$ is assigned to be equal to $x^{n}_{i}$. The algorithm keeps updating the color representation $c^{(l)}_i$ according to Equation \ref{1-wl} iteratively until it arrives at convergence, where \emph{HASH} represents an injective mapping that maps $c^{(l-1)}_i$ to $c^{(l)}_i$ and $\mathcal{N}_{(i)}$ indicates the set of all neighbor nodes of node $n_{i}$. In other words, $c^{(l)}_i$ is unique for each unique pair $(c^{(l-1)}_i, \{\{c^{(l-1)}_j| n_{j} \in \mathcal{N}_{(i)}\}\})$.
\begin{equation}
\label{1-wl}
    c^{(l)}_i = HASH(c^{(l-1)}_i, \{\{c^{(l-1)}_j| n_{j} \in \mathcal{N}_{(i)}\}\})
\end{equation}
Hence Weisfeiler-Lehman algorithm updates color representations of nodes $c^{(l)}_i$ by considering its own previous color representation $c^{(l-1)}_i$ and an aggregation of color representations from a multiset of their neighbor nodes $\{\{c^{(l-1)}_j| n_{j} \in \mathcal{N}_{(i)}\}\}$. The final color representations of all nodes are derived when the color representations between two iterations remain unchanged, meanwhile a representation of the whole graph is obtained. Weisfeiler-Lehman algorithm determines that two graphs are not isomorphic if they have different graph representations. It has been successful in passing isomorphism tests for the majority of graphs \cite{babai1979canonical} although there exist some failure cases \cite{sato2020survey}.
\subsection{Propagation-Aggregation Strategy in Graph Neural Networks}
As a prevailing strategy adopted by majority of GNNs, propagation-aggregation strategy iteratively updates representations of nodes by aggregating node representations from neighbor nodes, thereby propagating information from nodes to nodes. Let $a^{(k)}_{i}$ describe an aggregation of representations from neighbor nodes of $n_{i} \in \mathcal{V}$. Additionally, let $h^{(k)}_{i}$ denote a latent representation of the node $n_{i}$ at the $k^{th}$ GNN layer and $h^{(0)}_{i} = x^{n}_{i}$. The AGGREGATE and UPDATE functions at the $k^{th}$ GNN layer are performed according to the Equations \ref{agg1} and \ref{update}, respectively. 
\begin{equation}
\label{agg1}
    a^{(k)}_{i} = AGGREGATE^{(k)}(\{\{h^{(k-1)}_{j}| n_{j} \in \mathcal{N}_{(i)}\}\})
\end{equation}
\begin{equation}
\label{update}
    h^{(k)}_{i} = UPDATE^{(k)}(a^{(k)}_{i}, h^{(k-1)}_{i})
\end{equation}
GNNs output a multiset of representations of all nodes $\{\{h^{(k)}_{i}| n_{i} \in \mathcal{V}\}\}$ in the final layer as node-level embeddings. For graph-level task, a READOUT function is utilized to construct a graph presentation $h_{G}$ by synthesizing representations of all nodes, as shown in Equation \ref{readout}.
\begin{equation}
\label{readout}
    h_{G} = READOUT(\{\{h^{(k)}_{i}| n_{i} \in \mathcal{V}\}\})
\end{equation}
Three common choices of READOUT functions are: \textit{Max}, \textit{Mean} and \textit{Sum}, with \textit{Sum} demonstrating the highest expressive power \cite{xu2018powerful}.

\section{Related work}
\subsection{Improvements on 1-WL Algorithm}
As a widely employed algorithm for testing isomorphisms, the classical 1-WL algorithm has recently drawn attentions for enhancing the performance of graph neural networks. However, the limitation of failing to identify some special cases \cite{huang2021short} highlights the need to extend the 1-WL to higher orders. A higher-order extension, known as k-WL \cite{grohe2015pebble,grohe2017descriptive} has been proposed to update the node representations using k-tuples, which consist of k nodes. Morris et al. proposed k-folklore-WL (k-FWL) \cite{morris2019weisfeiler}, which is based on higher-order graph structures at multiple scales. Both k-WL and k-FWL have been mathematically proved to enhance the discriminative power \cite{huang2021short}. 
\subsection{Graph Neural Networks}
Graph Neural Networks primarily utilize the propagation-aggregation strategy to iteratively update a node's representation by aggregating information from its neighboring nodes. The representation of the entire graph is derived by applying a READOUT function on representations of all the nodes in the graph. Among classical GNN variants, graph convolutional networks (GCNs) have shown outstanding performance on graph learning and can be broadly categorized into two main types \cite{zhang2019graph}: spectral-based and spatial-based. Spectral-based graph convolutional networks \cite{bruna2013spectral,henaff2015deep,kipf2016semi,defferrard2016convolutional} are designed to apply convolutional operations on graphs in Fourier domain. To overcome the limitations of fixed graph structures on convolutional operations in Fourier domain, spatial-based graph convolutional networks \cite{duvenaud2015convolutional,niepert2016learning,hamilton2017inductive,zhang2018end,li2020deepergcn} have been developed. These networks bypass computational intensive convolutional operations by replacing them with the propagation-aggregation strategy. In addition to GCNs, Graph attention networks (GATs) \cite{velivckovic2017graph} exploit attention mechanisms with a convolution-style approach by leveraging masked self-attentional layers. Graph isomorphism networks \cite{xu2018powerful} are designed to maximize the discriminative power of a GNN based on the Weisfeiler-Lehman Algorithm. Xu et al. proposed GraphSAGE \cite{xu2020inductive}, which employs inductive representation learning by sampling neighborhoods and aggregating information. Graph neural networks, specifically developed for graph-related tasks such as node classification and graph classification, have demonstrated outstanding performance across various applications.

\section{Edged Weisfeiler-Lehman Algorithm}
In this section, we introduce the Edged Weisfeiler-Lehman (E-WL) algorithm, which serves as an extension of 1-WL algorithm. The E-WL algorithm addresses a limitation of 1-WL algorithm by incorporating edge features. To achieve  this, we propose the Node-Edge tuple, a new data structure, and introduce a new representation update mechanism upon Node-Edge tuples.
\begin{definition}
  Node-Edge tuple: A Node-Edge tuple consists of two elements in a fixed order: the representation $c_{i}$ of a node $n_{i}$, and the representation $x^{e}_{i,j}$ of an edge $e_{i,j}$, denoted as $(c_{i}, x^{e}_{i,j})_{tup}$.
\end{definition}
 Note that a Node-Edge tuple must only contain related elements. For example, in the tuple $(c_{i}, x^{e}_{j,k})_{tup}$, the edge representation $x^{e}_{j,k}$ refers to the edge $e_{j,k}$ which is not directly connected to node $n_{i}$, making it undefined. For any node in a graph, there is always at least one edge connecting to a neighboring node if the graph has no isolated nodes. This forms at least one Node-Edge tuple. We define this specific type of multiset as a \emph{Node-Edge tuple multiset}, denoted as $T$. Given a node $n_{i}$ in graph $\mathcal{G}$, then $T_{i}=\{\{(c_{j},x^{e}_{i,j})_{tup}| n_{j} \in \mathcal{N}_{(i)}\}\}$, where $c_{j} \in \mathcal{X}_{V}, x^{e}_{i,j} \in \mathcal{X}_{E}$. The characteristics of Node-Edge tuple multisets are similar to standard multisets, making them applicable to \emph{deep multisets} theory \cite{zaheer2017deep,xu2018powerful}, which parameterizes universal functions with neural networks on multisets.\\
The proposed E-WL Algorithm is detailed in algorithm \ref{E-WL algorithm}. The input to the E-WL algorithm is a graph $\mathcal{G}$, and the output of the algorithm is a node-multiset $\mathcal{X}^{*}_{V}$. Initially, the representation $c^{(0)}_i$ of a node $n_{i}$ is set to the node features $x^{n}_{i}$. At each iteration, for each node $n_{i}$ in graph $\mathcal{G}$, a Node-Edge tuple multiset $T_{i}^{(l)}$ aggregates the neighbor Node-Edge tuples (line 6). The representation $c^{(l)}_i$ is then updated at line 7 based on its previous representation $c^{(l-1)}_i$ and the Node-Edge tuple multiset $T_{i}$. An updated node-multiset $\mathcal{X}^{(l)}_{V}$ of all node representations is derived for next iteration at line 8. This process converges until $c^{(l)}_i = c^{(l-1)}_i$ for all nodes in this graph, and $\mathcal{X}^{*}_{V}$ is returned as the output.\\
\begin{algorithm}[htb] 
	\caption{Edged Weisfeiler-Lehman algorithm (E-WL)}
	\label{E-WL algorithm}
\textbf{Input: $\mathcal{G} = (\mathcal{V},\mathcal{E}, \mathcal{X}_{V}, \mathcal{X}_{E})$}\\
 $c^{(0)}_i = x^{n}_{i}$ for all $n_{i} \in \mathcal{V}$\\
 $\mathcal{X}^{(0)}_{V} = \mathcal{X}_{V}$\\
    \textbf{repeat}\\
        \Indp
        for each $n_{i} \in \mathcal{V}$\\
        \Indp
        $T_{i}^{(l)} =  \{\{(c^{(l-1)}_{j},x^{e}_{i,j})_{tup}| n_{j} \in \mathcal{N}_{(i)}\}\}, c^{(l-1)}_{j} \in \mathcal{X}^{(l-1)}_{V}, x^{e}_{i,j} \in \mathcal{X}_{E}$\\
        $c^{(l)}_i = HASH(c^{(l-1)}_i, T_{i}^{(l)})$\\
        \Indm
        $\mathcal{X}^{(l)}_{V} =  \{\{c^{(l)}_i|n_{i} \in \mathcal{V}\}\}$\\
        \Indm
    \textbf{until} $c^{(l)}_i = c^{(l-1)}_i$ for all $n_{i} \in \mathcal{V}$\\
    \textbf{return} $\mathcal{X}^{*}_{V} = \mathcal{X}^{(l)}_{V}$
\end{algorithm}\\
Similar to the 1-WL algorithm, which constructs injective mappings based on a pair consisting of the a node's representation and a multiset of its neighboring nodes' representations, the E-WL algorithm updates a node's representation using an injective mapping formed from the node's own representation and a Node-Edge tuple multiset. Figures \ref{example1} and \ref{example2} illustrate the differences in discriminative power between the 1-WL and E-WL algorithms in isomorphism testing on the same set of three graphs. Figure \ref{example1} shows that the 1-WL algorithm identifies all three graphs as isomorphic since it does not consider edge features. In contrast, the E-WL algorithm can distinguish that graph \emph{(c)} is not isomorphic to graphs \emph{(a)} and \emph{(b)} by incorporating the edge features in node representation updates. Therefore, if edge features are available, the discriminative power of the E-WL algorithm surpasses that of the 1-WL algorithm.

\begin{theorem}
\label{t1}
    The Discriminative power of the E-WL algorithm is either equal to or greater than that of the 1-WL algorithm.
\end{theorem}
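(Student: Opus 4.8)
\medskip
\noindent\textbf{Proof plan.}
The plan is to establish the stronger statement that the coloring produced by E-WL refines the coloring produced by 1-WL at every iteration; from this it follows immediately that any pair of graphs separated by 1-WL is also separated by E-WL, so the discriminative power of E-WL is at least that of 1-WL, and the example of Figures~\ref{example1} and~\ref{example2} shows the separation can be strict. Throughout I would write $c^{(l)}_i$ for the E-WL color of $n_{i}$ (as in Algorithm~\ref{E-WL algorithm}) and $\bar{c}^{(l)}_i$ for its 1-WL color, and, as is standard for WL-type arguments, fix a single injective HASH for each algorithm so that colors are comparable across the two inputs (equivalently, run both algorithms on the disjoint union $\mathcal{G}^{1}\sqcup\mathcal{G}^{2}$, which adds no edges between components).

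\noindent\textbf{Refinement claim.} I claim that for every $l$ there is a function $\phi^{(l)}$ with $\bar{c}^{(l)}_i = \phi^{(l)}\!\big(c^{(l)}_i\big)$ for every node $n_{i}$ of either graph, and I would prove this by induction on $l$. The base case $l=0$ is immediate since both algorithms set $c^{(0)}_i = \bar{c}^{(0)}_i = x^{n}_{i}$, so $\phi^{(0)}$ is the identity. For the inductive step, recall that $c^{(l)}_i = HASH\big(c^{(l-1)}_i, T_{i}^{(l)}\big)$ with $T_{i}^{(l)} = \{\{(c^{(l-1)}_j, x^{e}_{i,j})_{tup}\mid n_{j} \in \mathcal{N}_{(i)}\}\}$ and that HASH is injective, so $c^{(l)}_i$ determines both $c^{(l-1)}_i$ and $T_{i}^{(l)}$. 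Projecting every Node-Edge tuple in $T_{i}^{(l)}$ onto its first coordinate recovers the multiset $\{\{c^{(l-1)}_j \mid n_{j} \in \mathcal{N}_{(i)}\}\}$; applying the inductive hypothesis $\phi^{(l-1)}$ to $c^{(l-1)}_i$ and elementwise to this multiset yields exactly the pair $\big(\bar{c}^{(l-1)}_i, \{\{\bar{c}^{(l-1)}_j \mid n_{j} \in \mathcal{N}_{(i)}\}\}\big)$ that the injective 1-WL hash sends to $\bar{c}^{(l)}_i$. Composing these steps defines $\phi^{(l)}$, and it is well defined because every ingredient used is a function of $c^{(l)}_i$ alone.

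\noindent\textbf{Conclusion.} Suppose 1-WL distinguishes $\mathcal{G}^{1}$ and $\mathcal{G}^{2}$. Then at some iteration $L$ the multisets $\{\{\bar{c}^{(L)}_i \mid n_{i} \in \mathcal{V}^{1}\}\}$ and $\{\{\bar{c}^{(L)}_i \mid n_{i} \in \mathcal{V}^{2}\}\}$ differ. Since forming a multiset commutes with applying $\phi^{(L)}$ elementwise, the corresponding multisets of E-WL colors must also differ, for otherwise their $\phi^{(L)}$-images, the two 1-WL multisets, would coincide; if E-WL has already stabilized before iteration $L$, run the same argument on its stable coloring, which still refines every 1-WL coloring. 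Hence E-WL distinguishes $\mathcal{G}^{1}$ and $\mathcal{G}^{2}$ as well, so its discriminative power is at least that of 1-WL, while Figures~\ref{example1} and~\ref{example2} exhibit graphs distinguished by E-WL but not by 1-WL, giving the claimed ``equal to or greater than''.

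\noindent\textbf{Main obstacle.} The substantive part of the argument is the bookkeeping behind the hash-based recovery: one must make the color alphabets of the two graphs genuinely comparable (the disjoint-union convention) and verify that the first-coordinate projection of the Node-Edge tuple multiset reproduces the 1-WL neighbor multiset \emph{with the correct multiplicities} --- this is precisely where the multiset formulation of $T_{i}^{(l)}$, rather than a plain set, is essential. Reconciling the possibly different stopping times of the two algorithms is a minor additional step, handled by comparing stable colorings as above.
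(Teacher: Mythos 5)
Your proposal is correct, and it takes a genuinely different route from the paper. The paper argues in two cases: when edge features are absent it observes that Node-Edge tuples collapse to plain node colors, and when they are present it runs a one-iteration contradiction, assuming the 1-WL pairs at iteration $l$ differ while the E-WL pairs coincide, and then ``removing edge representations'' from the equal Node-Edge tuple multisets to contradict the assumed 1-WL inequality. That contradiction step is really the same projection observation at the heart of your inductive step, but the paper applies it using a single symbol $c^{(l-1)}$ for both the 1-WL and the E-WL colors at iteration $l-1$, which tacitly presupposes that the two algorithms have produced comparable colorings up to that point. Your refinement claim with the explicit family of maps $\phi^{(l)}$, proved by induction on $l$, is precisely the bookkeeping that justifies this identification: it shows the E-WL color functionally determines the 1-WL color at every iteration, from which the separation statement follows for the whole run of the algorithm rather than for a single step. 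What your approach buys is rigor and uniformity (it handles all iterations, the disjoint-union comparability issue, and mismatched stabilization times in one framework); what the paper's approach buys is brevity and a self-contained treatment of the featureless case, which your write-up absorbs into the base case and the observation that projection is then injective. Both proofs delegate strictness to the example in Figures~\ref{example1} and~\ref{example2}, so your argument matches the paper on that point.
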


\begin{proof}
    We prove Theorem \ref{t1} from two scenarios: \emph{1.} if edge features are not available, then the discriminative power of E-WL is equivalent to that of 1-WL; \emph{2.}  if edge features are available, the discriminative power of E-WL is greater than that of 1-WL.\\
    case \emph{1:} The $x^{e}_{i,j}$ element in a Node-Edge tuple $(c^{(l-1)}_{j},x^{e}_{i,j})_{tup}$ is an edge representation. If edge features are unavailable, we denote the edge representations as a constant \emph{b} without losing generality. Thus, each Node-Edge tuple $(c^{(l-1)}_{j},x^{e}_{i,j})_{tup}$ becomes equivalent to a regular node representation $c^{(l-1)}_{j}$. Therefore, the E-WL algorithm is as powerful as the 1-WL in this cases. \\
    case \emph{2:} We have presented a case where E-WL is more powerful than 1-WL. To prove that 1-WL can't be more powerful than E-WL under any circumstance if edge features are available, we use a contradiction. Assuming the discriminative power of 1-WL is greater than that of E-WL in some cases. Consider two graphs $\mathcal{G}^{1} = (\mathcal{V}^{1},\mathcal{E}^{1}, \mathcal{X}^{1}_{V},\mathcal{X}^{1}_{E})$ and $\mathcal{G}^{2} = (\mathcal{V}^{2},\mathcal{E}^{2}, \mathcal{X}^{2}_{V},\mathcal{X}^{2}_{E})$, the following two conditions should hold at a certain $l^{th}$ iteration for $n_{i} \in \mathcal{V}^{1}, x^{e}_{i,j} \in \mathcal{X}^{1}_{E}$ and $n_{v} \in \mathcal{V}^{2}, x^{e}_{v,u} \in \mathcal{X}^{2}_{E}$.

\begin{equation}
    (c^{(l-1)}_i, \{\{c^{(l-1)}_{j}| n_{j} \in \mathcal{N}_{(i)}\}\}) \\ \neq (c^{(l-1)}_v, \{\{c^{(l-1)}_{u}| n_{u} \in \mathcal{N}_{(v)}\}\})
\end{equation}

\begin{equation}
\label{prove1-1}
    (c^{(l-1)}_i, \{\{(c^{(l-1)}_{j},x^{e}_{i,j})_{tup}| n_{j} \in \mathcal{N}_{(i)} \}\}) \\
    = (c^{(l-1)}_v, \{\{(c^{(l-1)}_{u},x^{e}_{v,u})_{tup}| n_{u} \in \mathcal{N}_{(v)}\}\})
\end{equation}

We can deduce from the Equation \ref{prove1-1} that $c^{(l-1)}_{i} = c^{(l-1)}_{v}$ and $\{\{(c^{(l-1)}_{j},x^{e}_{i,j})_{tup}|$ $ n_{j} \in \mathcal{N}_{(i)}, \}\} = \{\{(c^{(l-1)}_{u},x^{e}_{v,u})_{tup}| n_{u} \in \mathcal{N}_{(u)}\}\}$. If we remove edge representations from all Node-Edge tuples, we get $\{\{c^{(l-1)}_{j}| n_{j} \in \mathcal{N}_{(i)} \}\} = \{\{c^{(l-1)}_{u}| n_{u} \in \mathcal{N}_{(u)}\}\}$. Therefore, $(c^{(l-1)}_i, \{\{c^{(l-1)}_{j}| n_{j} \in \mathcal{N}_{(i)}\}\}) \neq (c^{(l-1)}_v, \{\{c^{(l-1)}_{u}| n_{u} \in \mathcal{N}_{(v)}\}\})$ can't hold in this case, leading to a contradiction. \\
\end{proof}

\begin{figure*}[!h]
    \centering
    \includegraphics[scale=0.41]{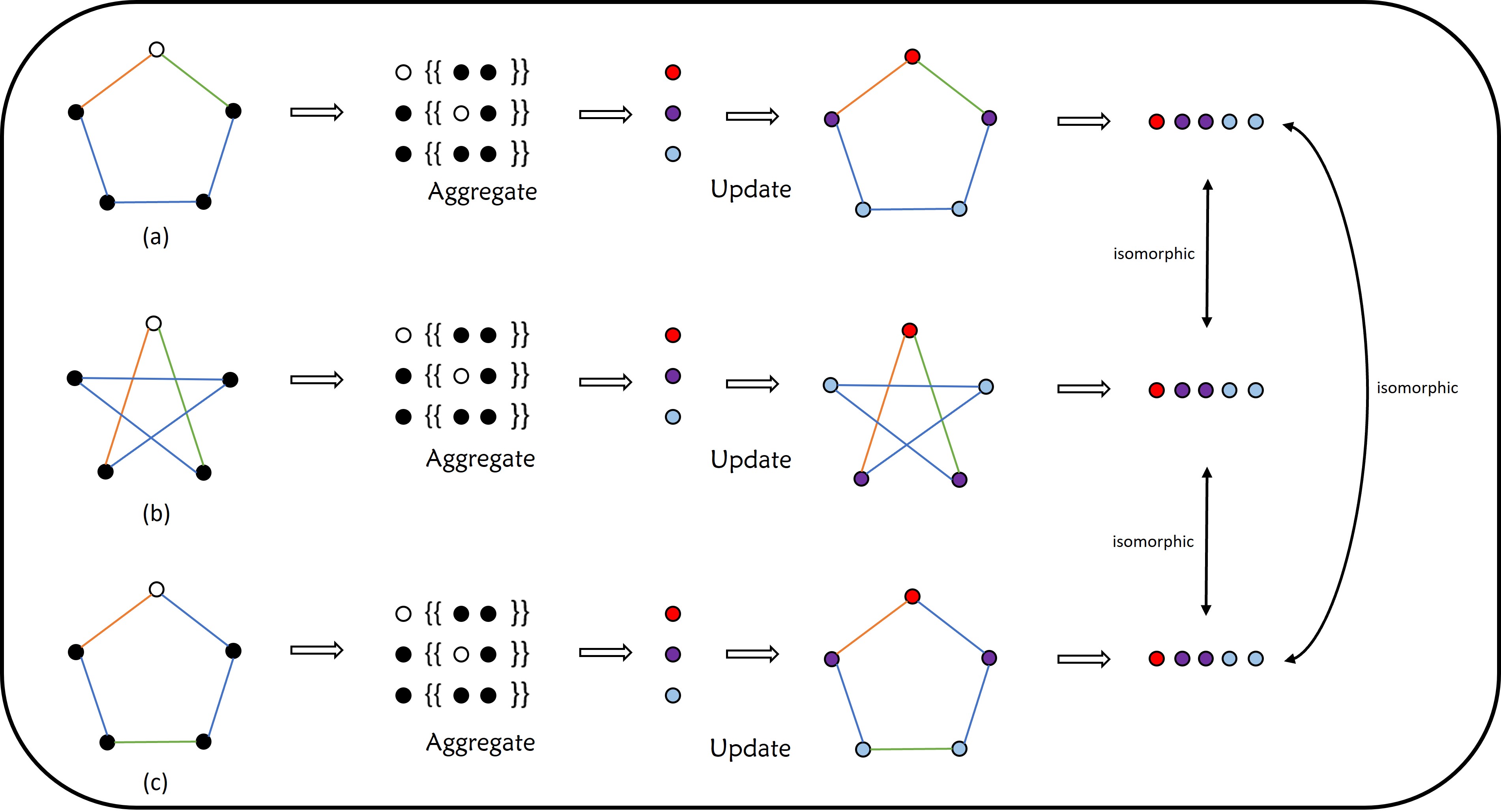}
    \caption{Apply 1-WL algorithm on three example graphs}
    \label{example1}
\end{figure*}
\begin{figure*}[!h]
    \centering
    \includegraphics[scale=0.41]{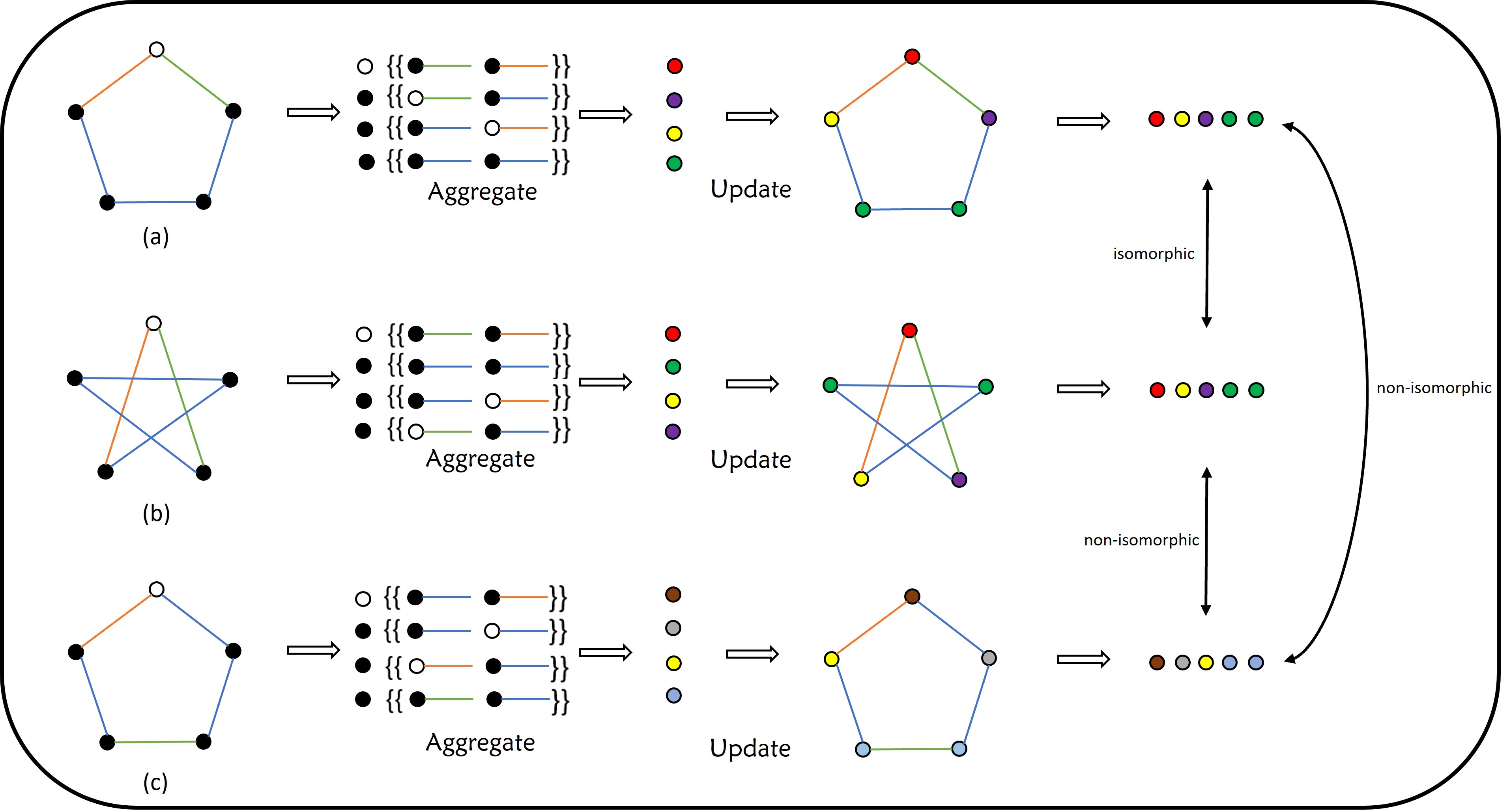}
    \caption{Apply E-WL algorithm on three example graphs}
    \label{example2}
\end{figure*}

\section{An Edged Graph Isomorphism Network Model}
A Graph Isomorphism Networks (GIN) is designed to map two graphs, determined to be isomorphic by 1-WL algorithm, to the same representation, while mapping non-isomorphic graphs determined by 1-WL algorithm to different representations. This ensures maximum discriminative power of graph neural networks, based on the 1-WL methodology. Similarly, we propose an Edged Graph Isomorphism Network (EGIN), extending the GIN model by integrating the E-WL algorithm. Given that the maximum discriminative power of any aggregation-based GNN is equivalent to the 1-WL testing \cite{xu2018powerful}, we assert that the proposed EGIN is as powerful as E-WL isomorphism testing in distinguishing different graphs with featured edges. We first formulate the aggregation and update processes of the EGIN model as shown in Equation \ref{agg2}, where $h_{i}^{(k)}$ denotes the latent representation of a node $n_{i}$ in the $k^{th}$ layer of the EGIN model. The function $f_{a}$ is an aggregation function, and the updating function is denoted as $\phi$. At each layer, $h_{i}^{(k)}$ is computed based on its representation $h_{i}^{(k-1)}$ from the previous layer and a multiset of Node-Edge tuples of its neighbors $\{\{(h^{(k-1)}_j,x^{e}_{i,j})| n_{j} \in \mathcal{N}_{(i)}\}\}$.  
\begin{equation}
\label{agg2}
    h_{i}^{(k)} = \phi(h_{i}^{(k-1)},f_a(\{\{(h^{(k-1)}_j,x^{e}_{i,j})| n_{j} \in \mathcal{N}_{(i)}\}\})
\end{equation}
To obtain the representation of the entire graph $g^{(k)}$ for graph-level tasks, we define a READOUT function $f_{r}$ in Equation \ref{read}.
\begin{equation}
\label{read}
    g^{(k)} = f_{r}(\{\{h_{i}^{(k)}| n_{i} \in \mathcal{V}\}\})
\end{equation}
Assuming aggregation function $f_{a}$, updating function $\phi$, and readout function $f_{r}$ are all injective, the EGIN model can approximate the maximum discriminative power of the E-WL algorithm. Given any two graphs $\mathcal{G}_{1}$ and $\mathcal{G}_{2}$, their graph representations $g^{(k)}_{1}$ and $g^{(k)}_{2}$ can be derived by the EGIN. If they are identified as non-isomorphic by the E-WL algorithm, then $g^{(k)}_{1} \neq g^{(k)}_{2}$ should hold. A \textit{sum} aggregator and an injective function $f_{ne}$ operating on Node-Edge tuples are leveraged in constructing the injective aggregation function $f_a(T)$ on a Node-Edge tuple multiset. In this way, the value of $f_a(T)$ is unique for each distinct input Node-Edge tuple multiset $T$. A viable choice of $f_{ne}$ is one-hot encoding that projects different tuples to distinct multi-digit representations. Consequently, $f_a(T)$ is defined as $f_a(T) = \sum_{(c,x^{e})_{tup} \in T} f_{ne}((c,x^{e})_{tup})$. We note here that the node representation $h_{i}^{(k-1)}$ in Equation \ref{agg2} is not a Node-Edge tuple, hence it can't serve as an input for the function $f_{ne}$. To address this issue, we define a special void edge element $\delta$ to build a Node-Edge tuple incorporating $h_{i}^{(k-1)}$. 
\begin{definition}
Void edge: a special edge with zero feature values in all dimensions, denoted as $\delta$.   
\end{definition}
The node representation $h$ severs as an additional input for function $f_a(T)$ by constructing a distinctive Node-Edge tuple that combines $h$ and $\delta$, denoted as $f_a(h,T) = f_{ne}((h,\delta)_{tup}) + \sum_{(c,x^{e})_{tup} \in T} f_{ne}((c,x^{e})_{tup})$. This value is unique for each pair of inputs $(h,T)$. Since $f_a(h,T)$ is permutation invariant, any function $G(h,T)$ on that input pair can be decomposed as $G(h,T) = \phi \{ f_a(h,T)\} =\phi \{f_{ne}((h,\delta)_{tup}) + \sum_{(c,x^{e})_{tup} \in T} f_{ne}((c,x^{e})_{tup})\}$. In GIN \cite{xu2018powerful}, a parameter $\epsilon$ is utilized as an indicator for node representation $h$ to maintain the injectivity of the composed functions. This indicator is not essential for the function $G(h,T)$ in EGIN, as the void edge $\delta$ already serves as an indicator to ensure the uniqueness of the pair of inputs $(h,T)$, since no Node-Edge tuple in $T$ should contain a void edge $\delta$. However, we still include this parameter in our model as an alternative method for constructing injective functions, and it can be removed by setting it to zero. Therefore, $G(h,T)$ is presented as: $G(h,T) = \phi \{(1+\epsilon) \cdot f_{ne}((h,\delta)_{tup}) + \sum_{(c,x^{e})_{tup} \in T} f_{ne}((c,x^{e})_{tup})\}$. The composed functions of $f_{ne}$ and $\phi$ in $G(h,T)$ can be learned through Multilayer Perceptrons (MLPs) due to their capacity to approximate any continuous function, as stated by the universal approximation theorem \cite{hornik1989multilayer}. As a result, the aggregation and update in each EGIN layer are represented in Equation \ref{EGIN}.
\begin{equation}
\label{EGIN}
     h_{i}^{(k)} = MLP^{(k)}\{(1+\epsilon^{(k)}) \cdot (h_{i}^{(k-1)}, \delta)    + \sum_{n_{j} \in \mathcal{N}_{(i)}} (h^{(k-1)}_j,x^{e}_{i,j})\}
\end{equation}
In Section \ref{experiments}, we refer to this model as EGIN-$\epsilon$, whereas the model without the $\epsilon$ parameter in the aggregation and update processes is denoted as EGIN. New representation of a node in the $k^{th}$ layer is updated based on both the representation of itself in the previous layer and a multiset of neighboring Node-Edge tuples. However, the EGIN model may lose generalization and degenerate into the original GIN model if the dimension of $h_{i}^{(k)}$ is significantly larger than that of $x^{e}_{i,j}$. For example, if the hidden dimension of $h_{i}^{(k)}$ is \emph{512} while the dimension of $x^{e}_{i,j}$ is only \emph{4} in a certain layer, node representations are likely to dominate during training. To address this limitation, we introduce two variants of EGIN: \emph{EGIN-C} (Cross updating) and \emph{EGIN-E} (Edge embedding).
\subsection{EGIN-C Model}
The first variant is denoted as EGIN-C with "C" indicating \emph{cross updating}. Given two vectors $h \in \mathbb{R}^{p}$ and $e \in \mathbb{R}^{q}$ , we define a cross updating operation, denoted as $\bigotimes(h,e) = \Vert^{p}_{i=0} \Vert^{q}_{j=0}(h_{i} \times e_{j}) = [h_{0} \times e_{0}, h_{0} \times e_{1}, ..., h_{0} \times e_{q}, h_{1} \times e_{0}, h_{1} \times e_{1}, ..., h_{1} \times e_{q}, ..., h_{p} \times e_{q}]$. In order to apply \emph{cross updating} operation on the node representation $h_{i}^{(k)}$, we define a unit edge.
\begin{definition}
Unit edge: an edge with unit feature values in all dimensions, denoted as $\psi$.
\end{definition}
We modified the function $f_a(h,T)$, so that the function $G(h,T)$ is denoted as $G(h,T) = \phi \{ f_a(h,T)\} = \phi \{(1+\epsilon) \cdot f_{ne}(\bigotimes(h,\psi)) + \sum_{(c,x^{e})_{tup} \in T} f_{ne}(\bigotimes(c,x^{e}))\}$, where $f_{ne}$ is still an injective function on the outputs of cross updating operation. Therefore, the aggregation and update processes of EGIN-C are formulated as shown in Equation \ref{cross}

\begin{equation} 
\label{cross}
    h_{i}^{(k)} = MLP^{(k)}\{(1+\epsilon^{(k)})\cdot \bigotimes(h_{i}^{(k-1)},\psi)  + \sum_{n_{j} \in \mathcal{N}_{(i)}} \bigotimes(h^{(k-1)}_j,x^{e}_{i,j})\}
\end{equation}

In Section \ref{experiments}, this model is referred to as EGIN-C-$\epsilon$, while the model without the $\epsilon$ parameter in the aggregation and update processes is denoted as EGIN-C. Note that the \emph{cross updating} is not an injective operation, and thus, the aggregation function
$f_{a}(h,T) = f_{ne}(\bigotimes(h,\psi)) + \sum_{(c,x^{e})_{tup} \in T} f_{ne}(\bigotimes(c,x^{e}))$ is also non-injective. This means it may not always produce unique outputs for each distinct pair of inputs $(h, T)$. However, it compensates for the loss of injectivity by integrating node and edge features. Consequently, performance of EGIN-C may not deteriorate in some cases, as the final task is graph classification rather than graph isomorphism identification. In this case, node representations are influenced by edge features when passed to neighboring nodes, leading to a synthesis of both node and edge features.

\subsection{EGIN-E Model}
Additionally, we proposed another variant of EGIN, called EGIN-E, where "E" stands for the embedding of edges, to address the aforementioned limitation. In this variant, We introduced another injective function $f_{e}$, which maps edge features $x^{e}_{i,j}$ to latent representations, into the function $G(h,T)$. The updated function is denoted as $G(h,T) = \phi\{f_a(h,T)\}= \phi \{f_{ne}((h,f_{e}(\delta))_{tup}) + \sum_{(c,x^{e})_{tup} \in T} f_{ne}((c,f_{e}(x^{e}))_{tup})\}$. Here, $(c,f_{e}(x^{e}))_{tup}$ is treated as a Node-Edge tuple, where $f_{e}(x^{e})$ constructs the representation of an edge. In this context, the function $f_{ne}$ can still be unique for each pair of inputs $(h,f_{e}(x^{e}))$. The aggregation and update processes of EGIN-E are formulated in Equation \ref{EGIN-E}, with the function $f_{e}$ represented by $MLP_{2}$.
\begin{equation}
\begin{split}
\label{EGIN-E}
h_{i}^{(k)} = MLP_{1}^{(k)}\{(1+\epsilon^{(k)}) \cdot (h_{i}^{(k-1)}, MLP_{2}^{(k)}(\delta)) \\
+  \sum_{n_{j} \in \mathcal{N}_{(i)}} (h^{(k-1)}_j,MLP_{2}^{(k)}(x^{e}_{i,j}))\}
\end{split}
\end{equation}

In Section \ref{experiments}, EGIN-E-$\epsilon$ refers to the model based on Equation \ref{EGIN-E}, while the model without the $\epsilon$ parameter is designated as EGIN-E. The $MLP_{1}$ functions similarly to the $MLP$ in Equations \ref{EGIN} and \ref{cross}, whereas the additional $MLP_{2}$ is used in each layer to project the edge features $x^{e}_{i,j}$ into a higher-dimensional space.

\section{Experiments}
\label{experiments}
\subsection{Datasets}

We evaluated the effectiveness of our proposed approaches by assessing their performance on multiple benchmark graph datasets for graph classification tasks. These datasets were sourced from the TU Graph dataset \cite{Morris+2020}, a comprehensive collection for graph learning tasks that includes a diverse range of graph datasets, such as small molecules, bioinformatics, computer visions, social networks and synthetic graphs. From this collection, we selected 12 datasets, all of which contain edge features (labels). These labeled edges were converted into feature vectors using a one-hot encoding mechanism. Detailed statistics for these datasets are provided in Table \ref{data}.

\begin{table*}[htb]
\small
	\centering
	\caption{Statistics of graph datasets}
		\label{data}
	\begin{tabular}{lllllll}
\hline
           & \begin{tabular}[c]{@{}l@{}}Number \\ of  Graphs\end{tabular} & Classes & \begin{tabular}[c]{@{}l@{}}Average \\ Nodes\end{tabular} & \begin{tabular}[c]{@{}l@{}}Average \\ Edges\end{tabular} & \begin{tabular}[c]{@{}l@{}}Node feature \\ dimension\end{tabular} & \begin{tabular}[c]{@{}l@{}}Edge feature \\ dimension\end{tabular} \\ \hline
MUTAG      & 188                                                          & 2       & 17.93                                                    & 19.79                                                    & 7                                                                 & 4                                                                 \\
AIDS       & 2000                                                         & 2       & 15.69                                                    & 16.20                                                    & 42                                                                & 3                                                                 \\
COX2\_MD   & 303                                                          & 2       & 26.28                                                    & 335.12                                                   & 7                                                                 & 6                                                                 \\
Cuneiform  & 267                                                          & 30      & 21.27                                                    & 44.80                                                    & 10                                                                & 4                                                                 \\
ER\_MD     & 446                                                          & 2       & 21.33                                                    & 234.85                                                   & 10                                                                & 6                                                                 \\
PTC\_FM    & 349                                                          & 2       & 14.11                                                    & 14.48                                                    & 18                                                                & 4                                                                 \\
PTC\_FR    & 351                                                          & 2       & 14.56                                                    & 15.00                                                    & 19                                                                & 4                                                                 \\
PTC\_MM    & 336                                                          & 2       & 13.97                                                    & 14.32                                                    & 20                                                                & 4                                                                 \\
PTC\_MR    & 344                                                          & 2       & 14.29                                                    & 14.69                                                    & 18                                                                & 4                                                                 \\
Tox21\_AhR & 607                                                          & 2       & 17.64                                                    & 18.06                                                    & 53                                                                & 4                                                                 \\
Tox21\_AR  & 585                                                          & 2       & 17.99                                                    & 18.45                                                    & 53                                                                & 4                                                                 \\
Tox21\_ARE & 552                                                          & 2       & 17.01                                                    & 17.33                                                    & 53                                                                & 4                                                                 \\ \hline
\end{tabular}
\end{table*}

\subsection{Experimental Settings}
We evaluated the performance of EGIN and its two variants, EGIN-C and EGIN-E, on selected benchmark graph datasets. Our framework was implemented in Pytorch using Python version 3.9. EGIN, EGIN-C and EGIN-E were implemented according to the Equations \ref{EGIN}, \ref{cross}, and \ref{EGIN-E}, respectively. For each framework type, we implemented two versions: one retaining the $\epsilon$ parameter denoting as $-\epsilon$, and another without it in aggregation and update processes. We performed 10-fold cross-validation on graph classification tasks across each dataset to obtain the average and standard deviation of accuracies. The dimensions of hidden layers were chosen from \emph{\{32,64,128\}}. For the EGIN-E model, the dimensions of edge feature embeddings are selected from \emph{\{8,16,32\}}. Additionally, we conducted 10-fold cross validations on these datasets using other state-of-the-art models, including DGCNN \cite{zhang2018end}, GAT \cite{velivckovic2017graph}, GIN \cite{xu2018powerful} and EGCN \cite{gong2019exploiting}, as baselines for graph classification tasks.

\subsection{Experimental results}
Tables \ref{result1_table} and \ref{result2_table} show experimental results and comparisons with other models. 
\begin{table*}[htb]
\small
\centering
\caption{Experimental results 1}
\label{result1_table}
\begin{tabular}{lllllll}
\hline
                  & MUTAG           & AIDS            & COX2\_MD        & Cuneiform        & ER\_MD          & PTC\_FM          \\ \hline
DGCNN             & 84.2 $\pm$ 9.4 & 98.3 $\pm$ 0.9 & 55.1 $\pm$ 5.7 & 38.0 $\pm$ 7.9  & 73.4 $\pm$ 8.0  & 54.1 $\pm$ 5.7 \\
GAT               & 83.8 $\pm$ 11.1 & 94.6 $\pm$ 1.4 & 62.4 $\pm$ 10.4 &       76.9 $\pm$ 8.4   & 58.5 $\pm$ 6.9 & 59.0 $\pm$ 9.2     \\
GIN               & 87.4 $\pm$ 6.7  & 93.4 $\pm$ 1.8  & 52.8 $\pm$ 6.8 & 21.2 $\pm$ 3.9 & 59.7 $\pm$ 7.1 & 55.2 $\pm$ 10.2  \\
EGCN              & 87.9 $\pm$ 9.6 & 94.8 $\pm$ 1.2 & 63.7 $\pm$ 8.9  & 77.5 $\pm$ 10.1 & 78.6 $\pm$ 8.9 & 59.1 $\pm$ 7.0  \\ \hline
EGIN              & \textbf{92.5 $\pm$ 3.1}  & \textbf{99.7 $\pm$ 0.2}  & 68.1 $\pm$ 6.2 & 87.3 $\pm$ 3.4  & 75.9 $\pm$ 6.4 &  62.4 $\pm$ 9.2 \\
EGIN-$\epsilon$   & 90.2 $\pm$ 3.5  & 99.6 $\pm$ 0.2  & 65.1 $\pm$ 8.4 & 89.3 $\pm$ 4.7  & 77.5 $\pm$ 5.1  &  63.1 $\pm$ 7.6 \\
EGIN-C            & 92.8 $\pm$ 2.7  & 99.4 $\pm$ 0.3  & \textbf{74.2 $\pm$ 5.1} & 88.6 $\pm$ 5.5  & 78.0 $\pm$ 4.9 & 62.6 $\pm$ 8.2   \\
EGIN-C-$\epsilon$ & 91.1 $\pm$ 2.1  & 99.6 $\pm$ 0.4  & 73.4 $\pm$ 5.4  & 89.0 $\pm$ 4.7  & \textbf{80.7 $\pm$ 4.2} &  62.9 $\pm$ 7.5   \\
EGIN-E            & 89.0 $\pm$ 4.1 & 99.6 $\pm$ 0.3  & 67.1 $\pm$ 7.4 & 91.3 $\pm$ 4.8  & 76.4 $\pm$ 4.8 &  62.1 $\pm$ 5.6   \\
EGIN-E-$\epsilon$ & 88.8 $\pm$ 4.2 & 99.6 $\pm$ 0.3 & 64.1 $\pm$ 7.9  & \textbf{91.7 $\pm$ 3.8}  & 75.3 $\pm$ 5.0  & \textbf{63.4 $\pm$ 7.7}           \\ \hline
\end{tabular}
\end{table*}

\begin{table*}[htb]
\small
\caption{Experimental results 2}
\centering
\label{result2_table}
\begin{tabular}{lllllll}
\hline
                  & PTC\_FR          & PTC\_MM         & PTC\_MR         & Tox21\_AhR       & Tox21\_AR       & Tox21\_ARE      \\ \hline
DGCNN             & 58.5 $\pm$ 6.3  & 54.2 $\pm$ 6.9 & 54.7 $\pm$ 6.1 & 83.7 $\pm$ 3.5  & 96.5 $\pm$ 1.8 & 74.5 $\pm$ 4.1 \\
GAT               & 67.1 $\pm$ 9.1  & 65.0 $\pm$ 8.2 & 56.6 $\pm$ 8.3 & 87.4 $\pm$ 3.7  &  96.9 $\pm$ 1.4  & 82.3 $\pm$ 3.3\\
GIN               & 61.8 $\pm$ 10.4 & 60.4 $\pm$ 9.5 & 52.0 $\pm$ 9.9  & 82.8 $\pm$ 5.8 & 94.8 $\pm$ 2.3 & 81.7 $\pm$ 2.5 \\
EGCN              & 67.5 $\pm$ 9.1  & \textbf{66.0 $\pm$ 8.6} & 58.4 $\pm$ 5.8 & 86.4 $\pm$ 3.7 & 96.6 $\pm$ 0.7 & 82.5 $\pm$ 4.1 \\ \hline
EGIN              & 66.0 $\pm$ 4.2  & 62.3 $\pm$ 7.2 & 60.1 $\pm$ 6.3 & 88.1 $\pm$ 4.4 & 96.8 $\pm$ 1.3 &  83.2 $\pm$ 4.5 \\
EGIN-$\epsilon$   & 66.5 $\pm$ 6.4  & 63.0 $\pm$ 8.5 & 60.9 $\pm$ 7.7 & 88.6 $\pm$ 4.5   & 97.6 $\pm$ 1.5 & 83.9 $\pm$ 4.2 \\
EGIN-C            & 67.6 $\pm$ 4.8  &  63.7 $\pm$ 7.1  & 60.1 $\pm$ 5.5 & 88.3 $\pm$ 4.2  & \textbf{98.1 $\pm$ 1.2}  &  84.2 $\pm$ 4.9 \\
EGIN-C-$\epsilon$ & \textbf{68.9 $\pm$ 5.3}  & 65.6 $\pm$ 6.4 & \textbf{62.2 $\pm$ 4.2} & 87.4 $\pm$ 4.9  & 97.9 $\pm$ 1.4 & \textbf{84.6 $\pm$ 4.7}  \\
EGIN-E            & 66.2 $\pm$ 5.1  & 64.8 $\pm$ 8.4 & 61.0 $\pm$ 8.3 &      88.3 $\pm$ 5.8   &  96.6 $\pm$ 1.8 & 84.2 $\pm$ 4.9  \\
EGIN-E-$\epsilon$ & 66.5 $\pm$ 5.7  & 64.9 $\pm$ 8.1 & 59.4 $\pm$ 8.6 & \textbf{89.4 $\pm$ 4.5}   &  96.9 $\pm$ 1.7 & 84.4 $\pm$ 4.2   \\ \hline
\end{tabular}
\end{table*}
Among four baseline models, EGCN demonstrates the best performance across most datasets due to its incorporation of edge features. Experimental results reveal that EGIN and its variants outperform the baselines on all datasets except for \textbf{PTC\_MM}. Notably, there are substantial performance improvements on the \textbf{COX2\_MD} and \textbf{Cuneiform} datasets. However, the distinctions within EGIN and its variants are less clear. EGIN-C and EGIN-C-$\epsilon$ outperform other models on the \textbf{COX2\_MD} dataset, while EGIN-E and EGIN-E-$\epsilon$ exhibit superior classification performance on \textbf{Cuneiform} dataset. 
Intuitively, models retaining the $\epsilon$ parameter should exhibit better performance, as introducing the $\epsilon$ parameter may allow models to learn additional information. However, the experimental results indicate no clear conclusion on the accuracy performance of utilizing the $\epsilon$ parameter, as models with the $\epsilon$ parameter only outperform models without it on some datasets, rather than all of them. Furthermore, we discovered that introducing the $\epsilon$ parameter to EGIN-C and EGIN-E models deteriorates computational performances and significantly extends training time, raising concerns about the effectiveness of the $\epsilon$ parameter. 
Regardless of the $\epsilon$ parameter, we conclude that the increase in discriminative power from E-WL is validated by comparing performances of all modes. EGIN and its variants demonstrate better performance than other baseline models, including the original GIN model, across almost all datasets, highlighting the contributions of exploiting edge features.

\section{Conclusions}
In this paper, we proposed an Edged Weisfeiler-Lehman algorithm that extends the traditional Weisfeiler-Lehman algorithm by incorporating edge features in graphs. Additionally, we developed an EGIN model based on our proposed algorithm. To address a potential limitation related to the edge feature dimension in the original EGIN, we introduced two variants of EGIN. Experimental results demonstrate that our proposed models consistently outperform selected state-of-the-art baseline models, highlighting the effectiveness of leveraging edge features for graph classification tasks.

\bibliographystyle{unsrt}  
\bibliography{references}  

\appendix
\section{Exploration on a variant of E-WL}
In our proposed E-WL algorithm, we exclusively update representations of nodes at each iteration while keeping the edge representations unchanged. Intuitively, updating edge representations based on the representations of the two connecting nodes could further enhance the discriminative power of the E-WL algorithm. Building upon this idea, we developed a variant called Edged Weisfeiler-Lehman algorithm with Edge Aggregation (E-WL-EA), presented as Algorithm \ref{E-WL-EA algorithm}. The formulas for updating the representations of nodes and edges are shown in Equations \ref{E-WL-EA core1} and \ref{E-WL-EA core2}, where $d^{(l)}_{i,j}$ denotes the representation of edge $e_{i,j}$ at the $l^{th}$ iteration.
\begin{algorithm}[htb]	
	\caption{Edged Weisfeiler-Lehman algorithm with Edge Aggregation (E-WL-EA)}
	\label{E-WL-EA algorithm}
\textbf{Input: $\mathcal{G} = (\mathcal{V},\mathcal{E}, \mathcal{X}_{V}, \mathcal{X}_{E})$}\\
 $c^{(0)}_i = x^{n}_{i}$ for all $n_{i} \in \mathcal{V}$\\
 $d^{(0)}_{i,j} = x^{e}_{i,j}$ for all $e_{i,j} \in \mathcal{E}$\\
$\mathcal{X}^{(0)}_{V} = \mathcal{X}_{V}$\\
$\mathcal{X}^{(0)}_{E} = \mathcal{X}_{E}$\\
    \textbf{repeat}\\
        \Indp
        for each $n_{i} \in \mathcal{V}$\\
        \Indp
        $T_{i}^{(l)} = \{\{(c^{(l-1)}_{j},d^{(l-1)}_{i,j})| n_{j} \in \mathcal{N}_{(i)}\}\}, c^{(l-1)}_{j} \in \mathcal{X}^{(l-1)}_{V}, d^{(l-1)}_{i,j} \in \mathcal{X}^{(l-1)}_{E}$\\
        $c^{(l)}_i = HASH(c^{(l-1)}_i, T_{i}^{(l)})$\\
        \Indm
        for each $e_{i,j} \in \mathcal{E}$\\
        \Indp
        $d^{(l)}_{i,j} = HASH(d^{(l-1)}_{i,j},c^{(l)}_i, c^{(l)}_j)$\\
        \Indm
        $\mathcal{X}^{(l)}_{V} =  \{\{c^{(l)}_i|n_{i} \in \mathcal{V}\}\}$\\
        $\mathcal{X}^{(l)}_{E} =  \{\{d^{(l)}_{i,j}|e_{i,j} \in \mathcal{E}\}\}$\\
        \Indm
    \textbf{until} $c^{(l)}_i = c^{(l-1)}_i$ for all $n_{i} \in \mathcal{V}$\\
    \textbf{return} $\mathcal{X}^{(*)}_{V} = \mathcal{X}^{(l)}_{V}$
	
\end{algorithm}

\begin{equation}
\label{E-WL-EA core1}
    c^{(l)}_i = HASH(c^{(l-1)}_i, \{\{(c^{(l-1)}_{j},d^{(l-1)}_{i,j})| n_{j} \in \mathcal{N}_{(i)}\}\})
\end{equation}
\begin{equation}
\label{E-WL-EA core2}
    d^{(l)}_{i,j} = HASH(d^{(l-1)}_{i,j},c^{(l)}_i, c^{(l)}_j)
\end{equation}

Intuitively, the discriminative power of E-WL-EA might seem superior to that of the E-WL algorithm due to the inclusion of updated edge representations during the aggregation process. However, it can be rigorously proven that edge aggregation is unnecessary and does not enhance discriminative power. As a result, the discriminative power of E-WL-EA is equivalent to that of E-WL, indicating that E-WL is an efficient algorithm.
\begin{theorem}
\label{t2}
    Discriminative power of E-WL-EA remains the same as that of E-WL.
\end{theorem}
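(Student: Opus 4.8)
The plan is to prove a statement strictly stronger than equality of discriminative power, namely that on every input graph and at every iteration $l$, the node coloring $c^{(l)}$ produced by E-WL and the node coloring produced by E-WL-EA induce exactly the same partition of the vertex set; call these partitions $P_l$ and $Q_l$. Equal isomorphism-distinguishing power then follows by the standard disjoint-union reduction: applied to $\mathcal{G}^1 \sqcup \mathcal{G}^2$, both algorithms produce the same vertex partition at every step, so they agree on whether some color class meets $\mathcal{V}^1$ and $\mathcal{V}^2$ in unequal numbers, i.e.\ on every (non-)isomorphism verdict. I would first record two consequences of \emph{HASH} being injective in both algorithms: (i) $c^{(l)}_i$ is the image of $c^{(l-1)}_i$ under an injective map, so $c^{(l)}_i$ determines $c^{(m)}_i$ for all $m\le l$ and the sequences $P_0, P_1, \dots$ and $Q_0, Q_1, \dots$ are refinement chains; and (ii) unrolling $d^{(l)}_{i,j}=\mathrm{HASH}(d^{(l-1)}_{i,j},c^{(l)}_i,c^{(l)}_j)$ down to $d^{(0)}_{i,j}=x^{e}_{i,j}$, the edge color $d^{(l)}_{i,j}$ determines $x^{e}_{i,j}$ together with $c^{(l)}_i$ and $c^{(l)}_j$.

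The heart of the argument is the converse of (ii): for every $l$, the edge color $d^{(l-1)}_{i,j}$ is a \emph{function of} the triple $\bigl(x^{e}_{i,j},\,c^{(l-1)}_i,\,c^{(l-1)}_j\bigr)$. I would show this by induction on $l$: writing $d^{(l-1)}_{i,j}=\mathrm{HASH}(d^{(l-2)}_{i,j},c^{(l-1)}_i,c^{(l-1)}_j)$, the inductive hypothesis makes $d^{(l-2)}_{i,j}$ a function of $x^{e}_{i,j}$ and the iteration-$(l-2)$ colors of $i$ and $j$, and by (i) those colors are in turn determined by $c^{(l-1)}_i$ and $c^{(l-1)}_j$. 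Combined with (ii), this establishes a bijection (over the values actually occurring) between $d^{(l-1)}_{i,j}$ and the triple consisting of $x^{e}_{i,j}$, the class of $i$ in $Q_{l-1}$, and the class of $j$ in $Q_{l-1}$.

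With the lemma in hand I would run the main induction on $l$. The base case $l=0$ is immediate since both colorings equal $x^{n}$, so $P_0=Q_0$. For the step, assume $P_{l-1}=Q_{l-1}$. By injectivity of \emph{HASH}, two nodes $i,v$ lie in the same $Q_l$-class iff they lie in the same $Q_{l-1}$-class and the multisets $\{\{(c^{(l-1)}_j,d^{(l-1)}_{i,j})\mid n_j\in\mathcal{N}_{(i)}\}\}$ and $\{\{(c^{(l-1)}_u,d^{(l-1)}_{v,u})\mid n_u\in\mathcal{N}_{(v)}\}\}$ are equal. By the lemma, and because the $Q_{l-1}$-class of $i$ is constant across node $i$'s own multiset (and is already fixed by the first condition), this multiset carries exactly the same information as $\{\{(\text{class of }j\text{ in }Q_{l-1},\,x^{e}_{i,j})\mid n_j\in\mathcal{N}_{(i)}\}\}$, which — using $Q_{l-1}=P_{l-1}$ — is precisely the data that (again by injectivity of \emph{HASH}) determines the $P_l$-class of $i$ in E-WL. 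Hence the two refinement rules define the same equivalence relation on pairs of vertices, so $P_l=Q_l$. Consequently the two algorithms generate identical partition sequences, stabilize at the same iteration, and return color multisets that are relabelings of one another, which yields both inequalities between discriminative powers at once.

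I expect the only real obstacle to be bookkeeping rather than depth: keeping straight which coloring refines which, and pinning down that the per-tuple substitution $(c^{(l-1)}_j,x^{e}_{i,j})\mapsto(c^{(l-1)}_j,d^{(l-1)}_{i,j})$ is legitimate precisely because the missing coordinate $c^{(l-1)}_i$ of the triple in the lemma is shared by every tuple of node $i$'s multiset. Conceptually, the single point that needs care is that an updated edge color might a priori encode information about a node's neighborhood beyond that node's own refined color; the lemma is exactly the assertion that it cannot, and everything else is repeated application of the injectivity of \emph{HASH}.
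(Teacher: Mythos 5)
Your proposal is correct, and its engine is the same fact the paper relies on: the updated edge color $d^{(l)}_{i,j}$ carries no information beyond the original feature $x^{e}_{i,j}$ and the current colors of its two endpoints. But the way you deploy that fact differs enough from the paper to be worth noting. The paper argues by contradiction: it supposes E-WL-EA separates a pair of nodes that E-WL does not, reduces this (under the explicit simplification that each Node-Edge tuple multiset contains a single tuple) to the four conditions of its Formula (14), and then shows via its Lemma that $d^{(l-2)}_{i,j}\neq d^{(l-2)}_{v,u}$ cannot coexist with the other three. You instead prove the stronger invariant that the two algorithms induce identical vertex partitions at every iteration, by a direct induction whose key lemma (that $d^{(l-1)}_{i,j}$ is a function of the triple $(x^{e}_{i,j},c^{(l-1)}_i,c^{(l-1)}_j)$, using the refinement-chain property of injective \emph{HASH}) is a cleaner restatement of what the paper's Lemma gestures at. Your route buys three things: it handles arbitrary neighbor multisets rather than the single-tuple special case; it delivers both inequalities at once, whereas the paper only explicitly rules out E-WL-EA being strictly stronger and leaves the converse direction (that replacing $x^{e}_{i,j}$ by $d^{(l-1)}_{i,j}$ loses nothing, which needs the unrolling argument in your point (ii)) implicit; and it makes precise the relabeling issue between the two colorings that the paper's equality-of-colors phrasing glosses over. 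The cost is only length. One small point to keep honest in a full write-up: your disjoint-union reduction should note that both algorithms' refinements never mix information across components, so running them on $\mathcal{G}^1\sqcup\mathcal{G}^2$ is equivalent to running them on each graph separately, which is immediate from the neighborhood-local update rules.
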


To prove the theorem \ref{t2}, we first introduce a Lemma \ref{lemma_t2} here. Intuitively, we can always map the nodes to identical representations if two algorithms possess the same discriminative power at current iterations on a graph. 
\begin{lemma}
\label{lemma_t2}
Suppose for any pair of edges $e_{i,j}$ and $e_{v,u}$ in a graph $\mathcal{G} =(\mathcal{V},\mathcal{E}, \mathcal{X}_{V},$ $\mathcal{X}_{E})$, $d^{(l-1)}_{i,j} = d^{(l-1)}_{v,u}$ if and only if $x^{e}_{i,j} = x^{e}_{v,u}$, there always exist two hash functions $c^{(l)}_i = HASH1(c^{(l-1)}_i, \{\{(c^{(l-1)}_j,d^{(l-1)}_{i,j})_{tup}| n_{j} \in \mathcal{N}_{(i)}\}\})$ and $c^{(l)}_v = HASH2(c^{(l-1)}_v,\{\{(c^{(l-1)}_u,d^{(l-1)}_{v,u})_{tup}| n_{u} \in \mathcal{N}_{(v)}\}\})$, such that $c^{(l)}_i = c^{(l)}_v$.
\end{lemma}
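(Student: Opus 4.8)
The plan is to show that, under the hypothesis, the refined edge colours $d^{(l-1)}_{i,j}$ encode nothing beyond the original edge features $x^{e}_{i,j}$, so that the node update of E-WL-EA at iteration $l$ can be realised as an injective E-WL hash pre-composed with a fixed relabelling of edge colours; the maps $HASH1$ and $HASH2$ the lemma asks for will in fact be this same map. (I read the lemma, as the surrounding text suggests, for a pair $n_{i},n_{v}$ that E-WL already identifies at iteration $l$ --- i.e. with $(c^{(l-1)}_i,T_{i}^{(l)})=(c^{(l-1)}_v,T_{v}^{(l)})$ implicit --- since otherwise the bare existential would be vacuous.)

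First I would record that the hypothesis ``$d^{(l-1)}_{i,j}=d^{(l-1)}_{v,u}$ iff $x^{e}_{i,j}=x^{e}_{v,u}$'' says precisely that $d^{(l-1)}_{i,j}\mapsto x^{e}_{i,j}$ is a well-defined bijection $\pi$ from $D=\{\, d^{(l-1)}_{i,j}\mid e_{i,j}\in\mathcal{E}\,\}$ onto $\{\, x^{e}_{i,j}\mid e_{i,j}\in\mathcal{E}\,\}$ --- well-definedness from the forward implication, injectivity from the converse --- and that $\pi$ depends only on $\mathcal{G}$, not on the chosen node pair. I then lift $\pi$ to Node-Edge tuples by $\Pi\big((c,d)_{tup}\big)=(c,\pi(d))_{tup}$ and to Node-Edge tuple multisets by applying it entrywise with multiplicities; since $\pi$ is a bijection, $\Pi$ is injective on tuples and on multisets of tuples, and by construction $\Pi\big(\{\{(c^{(l-1)}_{j},d^{(l-1)}_{i,j})_{tup}\mid n_{j}\in\mathcal{N}_{(i)}\}\}\big)$ equals $\{\{(c^{(l-1)}_{j},x^{e}_{i,j})_{tup}\mid n_{j}\in\mathcal{N}_{(i)}\}\}=T_{i}^{(l)}$, the Node-Edge tuple multiset used by E-WL at iteration $l$ (and likewise for $n_{v}$).

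Next I would fix an injective hash $H$ of the kind the E-WL update already relies on --- one exists because on a finite graph with discrete labels all colours and all tuple multisets lie in a countable set --- and set $HASH1=HASH2:=H\circ(\mathrm{id}\times\Pi)$, i.e. $(c,S)\mapsto H(c,\Pi(S))$; as a composition of injective maps this is injective. With this choice the E-WL-EA update gives $c^{(l)}_{i}=H(c^{(l-1)}_{i},T_{i}^{(l)})$ and $c^{(l)}_{v}=H(c^{(l-1)}_{v},T_{v}^{(l)})$, which are exactly the colours E-WL assigns at iteration $l$. Hence $c^{(l)}_{i}=c^{(l)}_{v}$ under these hashes precisely when $(c^{(l-1)}_i,T_{i}^{(l)})=(c^{(l-1)}_v,T_{v}^{(l)})$, i.e. under the stated hypothesis; conversely these hashes never separate nodes that E-WL does not, which is the form in which the lemma is later fed into the (inductive) proof that edge aggregation adds no discriminative power in Theorem \ref{t2}.

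The hard part will not be any calculation but getting the statement straight: the content is that $HASH1$ and $HASH2$ can be taken injective and made to reproduce the E-WL node update simultaneously, not the vacuous existence of some hashes. The points needing care are (i) that the hypothesis really does yield a well-defined and injective $\pi$ (both implications are used), and (ii) that pre-composing an injective E-WL hash with the relabelling $\Pi$ stays injective on the domain actually met. One should also note that a single pair of maps works uniformly over all node pairs, since $\pi$ and $\Pi$ are built from $\mathcal{G}$ alone, and that when the lemma is applied with $n_{i},n_{v}$ in distinct graphs $\mathcal{G}^{1},\mathcal{G}^{2}$ the construction should be run on their disjoint union so that $\pi$ is consistent across both.
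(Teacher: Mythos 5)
Your proposal is correct under the only sensible reading of the lemma, and it supplies a proof that the paper does not actually contain: the proof environment that follows Lemma~\ref{lemma_t2} is in fact the contradiction argument for Theorem~\ref{t2} (it opens with ``This theorem is proved by contradiction'' and \emph{cites} the lemma partway through), so the lemma itself is supported in the paper only by the informal remark that one can ``always map the nodes to identical representations.'' Your construction makes that remark precise, and the two directions of the ``iff'' hypothesis are used exactly where they must be: the forward implication gives well-definedness of the relabelling $\pi\colon d^{(l-1)}_{i,j}\mapsto x^{e}_{i,j}$ and the converse gives its injectivity, the entrywise lift $\Pi$ is then injective on Node-Edge tuple multisets and carries the E-WL-EA multiset onto $T^{(l)}_{i}$, and $H\circ(\mathrm{id}\times\Pi)$ is an injective hash whose value on the E-WL-EA data equals the E-WL colour at iteration $l$, so taking $HASH1=HASH2$ to be this single map gives $c^{(l)}_{i}=c^{(l)}_{v}$ precisely when E-WL identifies $n_{i}$ and $n_{v}$. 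The interpretive caveats you flag are genuine and should be stated, not apologised for: as literally written the lemma carries no hypothesis on $n_{i},n_{v}$ and its bare existential is vacuous (two unrelated maps can always be arranged to agree on one pair of outputs), so the implicit premise $(c^{(l-1)}_{i},T^{(l)}_{i})=(c^{(l-1)}_{v},T^{(l)}_{v})$ must be read in, exactly as it is in the application inside Theorem~\ref{t2}; and the disjoint-union step is needed because the theorem applies the lemma to nodes lying in two different graphs while the hypothesis quantifies over edge pairs of a single graph. One further mismatch worth recording: Theorem~\ref{t2} invokes the lemma to conclude equality of \emph{edge} representations $d^{(k)}_{i,j}=d^{(k)}_{v,u}$, whereas the lemma's conclusion concerns node representations; your relabelling argument extends verbatim to the edge update of Equation~\ref{E-WL-EA core2} and, iterated jointly over $k$, yields the statement the theorem actually needs.
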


\begin{proof}
This theorem is proved by contradiction. Given two graphs $\mathcal{G}^{1} = (\mathcal{V}^{1},\mathcal{E}^{1}, \mathcal{X}^{1}_{V},\mathcal{X}^{1}_{E})$ and $\mathcal{G}^{2} = (\mathcal{V}^{2},\mathcal{E}^{2}, \mathcal{X}^{2}_{V},\mathcal{X}^{2}_{E})$, we assume E-WL-EA is more powerful than E-WL. This assumption implies that there must exist a case where following two conditions are satisfied.
\begin{equation}
\begin{split} 
    (c^{(l-1)}_i, \{\{(c^{(l-1)}_j,d^{(l-1)}_{i,j})| n_{j} \in \mathcal{N}_{(i)}, e_{i,j} \in \mathcal{E}^{1}\}\}) \\ \neq  (c^{(l-1)}_v, \{\{(c^{(l-1)}_u,d^{(l-1)}_{v,u})| n_{u} \in \mathcal{N}_{(v)},  e_{v,u} \in \mathcal{E}^{2}\}\})
\end{split}
\end{equation}
\begin{equation}
\begin{split} 
    (c^{(l-1)}_i, \{\{(c^{(l-1)}_j,x^{e}_{i,j})| n_{j} \in \mathcal{N}_{(i)}, e_{i,j} \in \mathcal{E}^{1}, x^{e}_{i,j} \in \mathcal{X}_{E}^{1}\}\}) \\ =   (c^{(l-1)}_v, \{\{(c^{(l-1)}_u,x^{e}_{v,u})| n_{u} \in \mathcal{N}_{(v)}, e_{v,u} \in \mathcal{E}^{2}, x^{e}_{v,u} \in \mathcal{X}_{E}^{2}\}\})
\end{split}
\end{equation}
For simplicity, we assume there is only one Node-Edge tuple in each multiset. By simplifying these two conditions, we derive four simpler conditions.
\begin{equation}
\label{thm2condition}
 \begin{cases}
    d^{(l-1)}_{i,j} \neq d^{(l-1)}_{v,u}\\
    x^{e}_{i,j} = x^{e}_{v,u}\\
    c^{(l-1)}_i = c^{(l-1)}_v\\
    c^{(l-1)}_j = c^{(l-1)}_u
    \end{cases}
\end{equation}
Here we aim to prove that all four conditions can not be satisfied simultaneously. Assume $d^{(l-1)}_{i,j} \neq d^{(l-1)}_{v,u}$ is true, and the update equations of $d^{(l)}_{i,j}$ and $d^{(l)}_{v,u}$ in E-WL-EA algorithm are given by: 
\begin{equation}
\label{edge_update}
    d^{(l-1)}_{i,j} = HASH(d^{(l-2)}_{i,j},c^{(l-1)}_i,c^{(l-1)}_j)
\end{equation}
\begin{equation}
    d^{(l-1)}_{v,u} = HASH(d^{(l-2)}_{v,u},c^{(l-1)}_v,c^{(l-1)}_u)
\end{equation}
Therefore, at least one of these three conditions should be true in this case:
\begin{equation}
 \begin{cases}
    d^{(l-2)}_{i,j} \neq d^{(l-2)}_{v,u}\\
    c^{(l-1)}_i \neq c^{(l-1)}_v\\
    c^{(l-1)}_j \neq c^{(l-1)}_u
    \end{cases}
\end{equation}
The later two conditions $c^{(l-1)}_i \neq c^{(l-1)}_v$ and $c^{(l-1)}_j \neq c^{(l-1)}_u$, conflict with formula \ref{thm2condition} and thus cannot be true. Next, we need to prove that $d^{(l-2)}_{i,j} \neq d^{(l-2)}_{v,u}$ cannot be satisfied in this case. Since the update process for node representations is injective, if $c^{(l-1)}_i = c^{(l-1)}_v$ and $c^{(l-1)}_j = c^{(l-1)}_u$, then $c^{(k)}_i = c^{(k)}_v$ and $c^{(k)}_j = c^{(k)}_u$ should hold for all $ k = 0, 1, 2, ..., l-2$. Knowing that $x^{e}_{i,j} = x^{e}_{v,u}$, and according to Lemma \ref{lemma_t2}, we can always construct injective functions to ensure that $d^{(k)}_{i,j} = d^{(k)}_{v,u}$ for all $ k = 1, 2, ..., l-2$ as well, which contradicts condition $d^{(l-2)}_{i,j} \neq d^{(l-2)}_{v,u}$. Therefore, the four conditions in Formula \ref{thm2condition} cannot all be satisfied simultaneously, making the assumption impossible. Consequently, the discriminative power of E-WL-EA is not superior to that of E-WL.
\end{proof}

\end{document}